\documentclass{article} 
\usepackage{iclr2027_arxiv,times}

\usepackage{amsmath,amsfonts,bm}

\def\eqref#1{equation~\ref{#1}}

\def\1{\bm{1}}

\DeclareMathAlphabet{\mathsfit}{\encodingdefault}{\sfdefault}{m}{sl}
\SetMathAlphabet{\mathsfit}{bold}{\encodingdefault}{\sfdefault}{bx}{n}

\usepackage{hyperref}
\usepackage{url}
\usepackage[utf8]{inputenc} 
\usepackage[T1]{fontenc}    
\usepackage{booktabs}       
\usepackage{amsfonts}       
\usepackage{nicefrac}       
\usepackage{microtype}      
\usepackage{xcolor}         

\usepackage{algorithm}
\usepackage{algorithmic}
\usepackage{multirow}

\usepackage{amsmath}
\usepackage{amssymb}
\usepackage{mathtools}
\usepackage{amsthm}

\newtheorem{theorem}{Theorem}
\newtheorem{corollary}{Corollary}

\definecolor{gaincolor}{RGB}{0,115,0}
\newcommand{\gain}[1]{\textcolor{gaincolor}{\scriptsize\, (+#1)}}

\title{DELA: Learning Controllable Edits for Diffusion Language Models}

\author{%
Xinsong Feng$^{1}$,
Chenan Wang$^{2}$,
Leshu Tang$^{3}$,
Johnny Liu$^{4}$,
Dong Li$^{4}$,
\\
\bfseries
Da Li$^{4}$,
Haipeng Chen$^{2}$
\\
\normalfont
$^{1}$Georgia Institute of Technology \quad
$^{2}$William \& Mary \quad
$^{3}$UCLA \quad
$^{4}$Yotta Labs
}

\iclrfinalcopy 
\begin{document}

\maketitle

\begin{abstract}

Diffusion language models (DLMs) support bidirectional denoising and iterative refinement, but generation remains largely controlled by token-level masking schedules.
This leaves draft-level refinement implicit and limits explicit control over when to insert missing content, delete redundant spans, revisit uncertain regions, or stop editing.
We introduce \textbf{Diffusion Editing Language Adapter (DELA)}, a plug-in framework that enables controllable draft refinement with a frozen DLM.
DELA operates on clean, variable-length drafts using four explicit editing actions: \textsc{Insert}, \textsc{Delete}, \textsc{Remask}, and \textsc{Stop}.
The controller determines when and where to edit the draft, while the frozen DLM fills masked regions through denoising, separating high-level edit control from token-level generation.
A naive single-phase editor would be computationally expensive because arbitrary interior edits can disrupt prefix stability and limit reliable KV-cache reuse.
To address this, DELA uses a two-phase decoding procedure: a fast, append-only drafting phase that supports KV-cache reuse, followed by a global refinement phase that enables targeted non-monotonic edits to the full draft.
We train the controller using a staged curriculum that begins with learning local edit priors from synthetic recovery tasks and mined actions that improve draft quality.
Subsequent stages refine edit selection through rollout-based outcome preferences and alignment with downstream task feedback.
Experiments on mathematical reasoning and code generation show that DELA achieves gains of up to 5.5 percentage points over direct diffusion decoding while keeping the backbone frozen.
DELA thus provides a modular approach to explicit draft-level control in diffusion language models.

\end{abstract}
\section{Introduction}

Diffusion language models (DLMs) offer a compelling alternative to autoregressive generation by enabling bidirectional context modeling and iterative refinement~\citep{li2022diffusion,austin2021structured,campbell2022continuous,lou2023discrete,gat2024discrete}.
Building on this paradigm, masked DLMs have improved generation quality, scalability, and theoretical understanding~\citep{shi2024simplified,sahoo2024simple,ou2024your,zheng2024masked,nie2025large,ye2025dream}.
Complementary work has explored reasoning-oriented post-training~\citep{zhao2025d1,tang2026wd1} and adaptive inference-time decoding~\citep{kim2025train,wang2025remasking,ben2025accelerated}.
Despite this progress, scalable masked DLMs still largely control generation through token-level denoising schedules, leaving draft-level refinement implicit.

This limitation is important because many generation errors are structural rather than purely lexical.
A draft may omit a reasoning step, contain redundant content, include an inconsistent span, or already be correct.
Effective refinement therefore requires deciding where to intervene, which edit to apply, and when to stop, rather than only choosing which masked positions to reveal.
Prior non-autoregressive and edit-based methods show that insertion, deletion, and iterative refinement are useful abstractions~\citep{lee2018deterministic,ghazvininejad2019mask,stern2019insertion,gu2019levenshtein}.
However, masked denoising alone does not provide a controllable policy for refining clean variable-length drafts.

Recent variable-length diffusion and flow models address these limitations by modifying the generative process through edit operations, insertion expectations, expansion/deletion transitions, or deletion-insertion dynamics~\citep{havasi2025edit,kim2025any,Dreamon2025,ding2026beyond}.
These approaches are powerful, but they typically make structural editing part of the learned generative dynamics.
Adopting them therefore often requires changing the model architecture, training objective, or backbone parameters, which can be costly at scale and limits the reuse of pretrained DLMs.
We therefore ask: \textit{Can we equip an existing diffusion language model with controllable draft-level refinement without modifying or retraining its denoising backbone?}

We introduce \textbf{Diffusion Editing Language Adapter (DELA)}, a plug-in framework that equips a frozen DLM with controllable draft-level refinement.
DELA keeps the diffusion backbone fixed during both training and inference, and trains only a lightweight edit controller over clean drafts.
The controller issues four actions---\textsc{Insert}, \textsc{Delete}, \textsc{Remask}, and \textsc{Stop}---while the frozen backbone realizes masked content through conditional denoising.
This design decouples high-level edit control from token-level generation.

The design of DELA follows a broader principle in modern model adaptation: reuse a strong pretrained generator and adapt its behavior through a lightweight control or alignment layer, rather than retraining the entire model.
In DELA, the frozen DLM provides token-level denoising ability, while the edit controller learns how to use this ability to improve a clean draft.
Directly learning such a policy from downstream rewards is inefficient because the action space is heterogeneous, the value of an edit is observed only after denoising, and useful feedback may require multi-step refinement.
DELA therefore uses a staged curriculum.
It first learns executable edit primitives from synthetic recovery tasks, then strengthens the warm start through mined improving-action distillation.
Later stages use rollout-labeled candidate preferences and downstream task feedback to train outcome-aware edit selection.
At inference time, DELA selects among a small set of executable edit candidates without task labels or rollout evaluation.

DELA also addresses the computational cost of structural editing.
A naive single-phase editor would allow \textsc{Insert}, \textsc{Delete}, and \textsc{Remask} throughout generation, but arbitrary interior edits make the sequence non-monotonic, break prefix stability, prevent reliable KV-cache reuse, and require repeated processing of changing full drafts.
DELA instead uses a two-phase generation process.
A \textit{fast drafting phase} first constructs an initial draft through append-only masked-block insertion, preserving cache-friendly monotonic growth while allowing local remasking within each newly generated block.
A \textit{global refinement phase} then applies the full action space to the completed draft, enabling targeted insertion, deletion, remasking, and stopping.
Thus, DELA is not a new diffusion backbone, but a lightweight draft-conditioned editing layer for controlling an already trained denoising model.

Our main contributions are as follows.
(1) We formulate draft-level refinement for DLMs as a plug-in control problem and propose DELA, an edit-controller framework that refines clean variable-length drafts via \textsc{Insert}, \textsc{Delete}, \textsc{Remask}, and \textsc{Stop} actions while keeping the denoising backbone fixed.
(2) We introduce a two-phase decoding procedure that combines cache-friendly append-only drafting with targeted non-monotonic refinement, avoiding global editing throughout generation.
(3) We develop a staged controller-learning curriculum that starts with synthetic edit recovery, strengthens the warm start with mined improving-action distillation, and uses rollout-labeled preferences and downstream feedback for outcome-aware edit selection.
(4) We show that DELA improves mathematical reasoning and code generation by up to 5.5 percentage points while keeping the diffusion backbone fixed, demonstrating the effectiveness of explicit draft-level edit control.

\section{Related Work}

\paragraph{Diffusion language models.}
Diffusion models for text generation have been explored in both continuous and discrete spaces.
Diffusion-LM~\citep{li2022diffusion} applies Gaussian diffusion to continuous word embeddings and decodes denoised embeddings back into tokens.
Discrete diffusion models instead operate directly on token spaces.
D3PM~\citep{austin2021structured} formulates discrete corruption through Markov transition matrices, including absorbing-mask transitions.
This line of work has led to masked DLMs, where the forward process corrupts tokens into \texttt{[MASK]} and the reverse model predicts clean tokens at masked positions.
Subsequent work improves this formulation through ratio estimation~\citep{lou2023discrete}, simplified masked diffusion objectives~\citep{shi2024simplified,sahoo2024simple}, and theoretical connections between absorbing diffusion and clean-token conditional distributions~\citep{ou2024your}.
DELA adopts this masked diffusion formulation and uses the pretrained model as a frozen token-level denoiser.

\textbf{Scaling and efficient generation with DLMs.}
Masked DLMs have also been scaled to competitive language models.
LLaDA~\citep{nie2025large} trains an 8B masked diffusion model from scratch and demonstrates strong instruction-following and zero-shot performance.
Dream~\citep{ye2025dream} initializes diffusion models from pretrained autoregressive models and improves reasoning performance with context-adaptive noise scheduling.
Beyond scaling, several works study how decoding affects efficiency and generation quality.
Block Diffusion~\citep{arriola2025block} enables blockwise generation and KV-cache reuse, while Fast-dLLM~\citep{wu2025fast} introduces approximate KV caching and confidence-aware parallel decoding for DLMs.
Other work studies adaptive token ordering and early stopping~\citep{kim2025train,li2026diffusion}, while ReMDM~\citep{wang2025remasking} and Tolerator~\citep{tian2025finish} revisit generated tokens through inference-time remasking without retraining the backbone.
DELA instead learns an external controller over clean drafts that can insert, delete, remask, or stop while keeping the denoising backbone frozen.

\textbf{Variable-length and edit-based generation.}
Another approach to the length and structural limitations of masked DLMs is to introduce variable-length or edit-based dynamics.
DiffusER~\citep{reid2023diffuser} models generation as iterative editing from a prototype or noisy sequence.
Edit Flows~\citep{havasi2025edit} formulates sequence generation as a continuous-time Markov chain over edit operations, including \textsc{Insert}, \textsc{Delete}, and \textsc{Substitute}, and trains the model with discrete flow matching over auxiliary alignments.
FlexMDM~\citep{kim2025any} extends masked diffusion with a joint interpolant that learns both an unmasking posterior and an insertion expectation, enabling principled variable-length masked diffusion.
DreamOn~\citep{Dreamon2025} introduces length-changing transitions such as \textsc{Expand} and \textsc{Delete} for code infilling beyond a fixed-size canvas.
DID~\citep{ding2026beyond} replaces masking with a deletion-insertion diffusion process, where the forward process deletes tokens and the reverse process inserts tokens.
OneFlow~\citep{nguyen2025oneflow} extends insertion-based Edit Flows \citep{havasi2025edit} to multimodal non-autoregressive generation.
Together, these methods enable flexible-length generation by making length-changing edits part of the diffusion or flow process.
This couples structural control with token-level denoising, making it difficult to reuse frozen DLMs as controllable editors.
DELA instead learns an external edit controller over a frozen denoiser, separating \emph{when and where to edit} from \emph{how masked content is realized}.

\vspace{-0.1em}
\section{DELA: Learning Controllable Edits}
\label{sec:method}

We introduce \textbf{Diffusion Editing Language Adapter (DELA)}, a plug-in framework that refines variable-length clean drafts through four editing actions: \textsc{Insert}, \textsc{Delete}, \textsc{Remask}, and \textsc{Stop}.
DELA decouples \emph{structural control} from \emph{semantic generation}: a learned editor decides where and how to modify the sequence, while a frozen masked DLM realizes masked content as clean text.
This turns the frozen denoiser into a controllable non-monotonic editor without modifying the diffusion backbone.
Figure~\ref{fig:dela_overview} gives an overview of the refinement loop.

\begin{figure}[t]
    \centering
    \includegraphics[width=\linewidth]{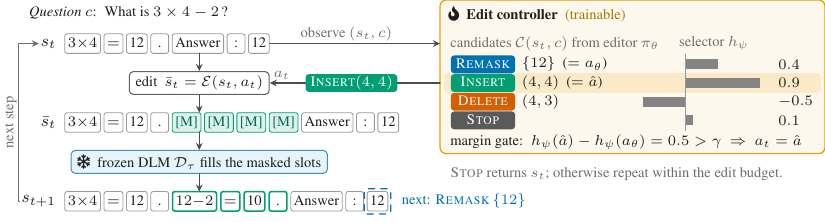}
    \vspace{-4mm}
    \caption{
    Overview of DELA. A learned edit controller operates on clean variable-length drafts and selects structural actions, while the frozen DLM realizes masked content through denoising. This design decouples high-level edit control from token-level lexical generation.
    }
    \vspace{-3mm}
    \label{fig:dela_overview}
\end{figure}

\subsection{Edit-Conditioned Refinement}
\label{subsec:edit_denoise}

DELA refines a variable-length clean draft $s_t\in V^\ast$ under context $c$.
At each time step, the editor selects an action $a_t$, applies a deterministic edit operator $\mathcal E$, and obtains an intermediate sequence
\[
    \bar s_t=\mathcal E(s_t,a_t),
\]
which may contain \texttt{[MASK]} tokens.
A frozen DLM then denoises $\bar s_t$ back to a clean state:
\begin{equation}
    s_{t+1}
    =
    T_\tau(a_t;s_t,c)
    =
    \mathcal D_\tau(\bar s_t,c),
    \label{eq:edit_denoise_transition}
\end{equation}
where $\mathcal D_\tau$ denotes threshold-based denoising with the frozen DLM backbone.
Thus, the editor controls structural changes, while lexical realization is delegated to the denoiser.
For \textsc{Stop}, the refinement process terminates and no denoising transition is applied.

\textbf{Action space.}
The editor chooses among four operations: \textsc{Insert}, \textsc{Delete}, \textsc{Remask}, and \textsc{Stop}.
For \textsc{Insert} and \textsc{Delete}, we write
\[
    a_t=(\mathrm{op}_t,\mathrm{pos}_t,\mathrm{len}_t).
\]
Here $\mathrm{op}_t\in \{\textsc{Insert}, \textsc{Delete}\}$ is the selected operation, $\mathrm{pos}_t$ denotes the start position of the edit, and
$\mathrm{len}_t$ denotes the span length.
For $\mathrm{op}_t=$ \textsc{Insert}, the edit creates $\mathrm{len}_t$ consecutive mask slots at $\mathrm{pos}_t$.
For $\mathrm{op}_t=$ \textsc{Delete}, the edit removes the span $[\mathrm{pos}_t,\mathrm{pos}_t+\mathrm{len}_t)$.
The proposal editor factorizes structural actions as
\begin{equation}
\pi_\theta(a_t\mid s_t,c)
=
\pi_\theta^{\mathrm{op}}(\mathrm{op}_t\mid s_t,c)
\pi_\theta^{\mathrm{pos}}(\mathrm{pos}_t\mid \mathrm{op}_t,s_t,c)
\pi_\theta^{\mathrm{len}}(\mathrm{len}_t\mid \mathrm{op}_t,\mathrm{pos}_t,s_t,c).
\label{eq:edit_policy_factorization}
\end{equation}

For $\mathrm{op}_t=$ \textsc{Remask}, the editor predicts token-level logits $u_i$ and samples a remask set
\begin{equation}
M_t
=
\{i:\; m_i\sim \mathrm{Bernoulli}(\sigma(u_i))\}.
\end{equation}
Tokens in $M_t$ are replaced by \texttt{[MASK]} and then regenerated by the frozen denoiser.
This operation allows DELA to revisit uncertain or inconsistent content while preserving the surrounding structure.
For $\mathrm{op}_t=$ \textsc{Stop}, no edit is applied and the current sequence is returned as the final output.

\textbf{Two-phase editing for efficient controllable refinement.}
Applying the full action space throughout generation is computationally inefficient because non-monotonic edits prevent reliable prefix reuse and may require repeatedly processing a changing full sequence.
Moreover, early generation primarily requires constructing content rather than global restructuring.
DELA therefore separates efficient draft construction from targeted non-monotonic refinement.

DELA therefore uses a two-phase procedure: a \textit{fast drafting} phase followed by a \textit{global refinement} phase.
During \emph{fast drafting}, the editor is restricted to append-only \textsc{Insert} actions with $\mathrm{pos}_t=|s_t|$.
The model constructs an initial draft by appending masked blocks and denoising them with the frozen DLM.
Before each newly appended block is finalized, DELA applies a local \textsc{Remask} step to reopen uncertain tokens and denoise them again.
Thus, fast drafting remains monotonic at the structural level while still allowing local correction within each generated block.
Once a block is finalized, its prefix KV states are kept fixed as later blocks are appended. The new block can use these cached states without recomputing the prefix.
This matches an uncached implementation that also freezes finalized prefix states, but differs from full bidirectional recomputation, which would update them using later blocks.

During \emph{global refinement}, DELA enables the full action space.
Starting from the initial draft, the editor and selector can insert missing content, delete redundant spans, remask uncertain tokens, or stop, while the frozen denoiser realizes masked content.
By restricting non-monotonic editing to this phase, DELA reserves expensive structural changes for targeted refinement.
Figure~\ref{fig:two_phase_decoding} illustrates the resulting procedure.

\begin{figure}[t]
    \centering
    \includegraphics[width=\linewidth]{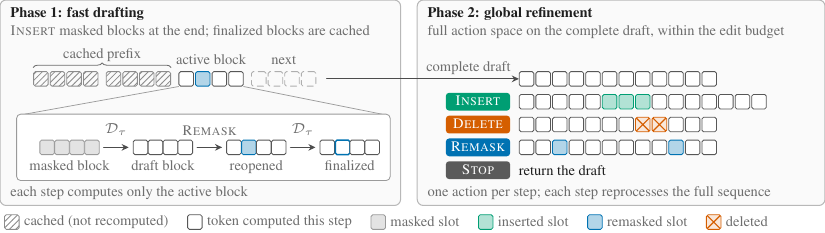}
    \vspace{-4mm}
    \caption{
    Two-phase editing in DELA. The fast drafting phase constructs an initial draft with append-only masked-block insertion and local remasking, enabling cache-friendly generation. The global refinement phase then unlocks the full action space to insert missing content, delete redundant spans, remask uncertain regions, or stop.
    }
    \vspace{-3mm}
    \label{fig:two_phase_decoding}
\end{figure}

\subsection{Learning the Edit Controller}
\label{subsec:learning_editor}

The frozen DLM provides strong token-level denoising but does not solve the control problem of deciding whether and how a clean draft should be edited.
Learning this control directly from downstream rewards is difficult because actions combine operation selection, localization, remasking, and stopping, while their effects are observed only after denoising and may unfold over multiple refinement steps.

We therefore train the controller in three stages.
Stage~1 learns \textbf{feasible} edit proposals, Stage~2 learns which proposals are \textbf{useful} under the frozen denoiser, and Stage~3 \textbf{aligns} editing with downstream task success.
The editor proposes executable actions, while a selector learns from rollout-derived preferences to predict which proposals are likely to improve the draft, replacing expensive rollout evaluation with one-step scoring at inference time.
Throughout training, the diffusion backbone remains frozen.

\textbf{Stage 1: learning executable edit proposals.}
Stage~1 initializes the proposal editor through supervised imitation using two sources of data: synthetic recovery examples and mined improving actions.
Synthetic examples teach the action structure, while mined actions bias the proposal distribution toward edits that are useful under the frozen denoiser.

First, synthetic recovery tasks provide action labels for the DELA action space.
Given a clean sequence $x$ and context $c$, we construct a set $\mathcal D_{\mathrm{syn}}$ of labeled examples $(s,c,a^\star)$.
For structural corruptions, the corrupted state $s=\xi(x)$ has an inverse edit that is either an \textsc{Insert} or a \textsc{Delete} action.
Deleting a span from $x$ induces an inverse \textsc{Insert} action, while inserting a distractor span induces an inverse \textsc{Delete} action.
Each structural example provides a target action
\[
    a^\star=(\mathrm{op}^\star,\mathrm{pos}^\star,\mathrm{len}^\star),
    \qquad
    \mathrm{op}^\star\in\{\textsc{Insert},\textsc{Delete}\}.
\]
The proposal editor imitates the full structural action with the factorized loss
\begin{equation}
\begin{aligned}
    \mathcal L_{\mathrm{struct}}(s,c,a^\star)
    =
    &-\log \pi_\theta^{\mathrm{op}}
        (\mathrm{op}^\star\mid s,c)
     -\log \pi_\theta^{\mathrm{pos}}
        (\mathrm{pos}^\star\mid \mathrm{op}^\star,s,c)  \\
    &-\log \pi_\theta^{\mathrm{len}}
        (\mathrm{len}^\star\mid \mathrm{op}^\star,\mathrm{pos}^\star,s,c).
\end{aligned}
\label{eq:struct_loss}
\end{equation}
In practice, scalar weights may be applied to the operation, position, and length terms.

For \textsc{Remask}, each synthetic example consists of a corrupted sequence $\tilde x$ and a token-level corruption mask $m^\star$.
The editor is trained to select \textsc{Remask} and localize the corrupted tokens:
\begin{equation}
\begin{aligned}
    \mathcal L_{\mathrm{remask}}(\tilde x,c,m^\star)
    =
    &-\log \pi_\theta^{\mathrm{op}}(\textsc{Remask}\mid \tilde x,c) \\
    &-
    \frac{1}{|\tilde x|}
    \sum_i
    \left[
        m_i^\star \log \sigma(u_i(\tilde x,c))
        +
        (1-m_i^\star)\log(1-\sigma(u_i(\tilde x,c)))
    \right].
\end{aligned}
\label{eq:remask_loss}
\end{equation}
For uncorrupted target sequences, the target action is \textsc{Stop}, giving
\begin{equation}
    \mathcal L_{\mathrm{stop}}(x,c)
    =
    -\log \pi_\theta^{\mathrm{op}}(\textsc{Stop}\mid x,c).
    \label{eq:stop_loss}
\end{equation}

We summarize these cases with an action-conditioned imitation loss:
\begin{equation}
\mathcal L_{\mathrm{act}}(s,c,a)
=
\begin{cases}
\mathcal L_{\mathrm{struct}}(s,c,a),
    & \mathrm{op}(a)\in\{\textsc{Insert},\textsc{Delete}\},\\
\mathcal L_{\mathrm{remask}}(s,c,M(a)),
    & \mathrm{op}(a)=\textsc{Remask},\\
\mathcal L_{\mathrm{stop}}(s,c),
    & \mathrm{op}(a)=\textsc{Stop},
\end{cases}
\label{eq:action_imitation_loss}
\end{equation}
where $M(a)$ denotes the target remask set for a \textsc{Remask} example.
Thus, each training example contributes only the loss corresponding to its labeled action.
We omit per-head scalar weights in Eq.~\eqref{eq:action_imitation_loss} for clarity.

Second, synthetic recovery teaches feasible local edits, but not all feasible edits are useful under the frozen denoiser.
We therefore mine improving actions from task-agnostic general text states.
For each sampled state, we enumerate candidate edits and execute each candidate, using the frozen denoiser whenever lexical tokens must be generated or remasked.
We keep actions that improve the resulting draft under target-aware rollout evaluation.
The target text is used only during offline label construction and is never available at inference time.
These mined actions provide outcome-informed imitation labels, but the update remains purely supervised.
The selector is not trained, and the denoiser remains frozen.
Let $\mathcal D_{\mathrm{mine}}$ denote the resulting set of mined state-action pairs $(s,c,a^+)$.

Stage~1 minimizes the mixture objective
\begin{equation}
\mathcal L_{\mathrm{stage1}}
=
\mathbb E_{(s,c,a^\star)\sim\mathcal D_{\mathrm{syn}}}
\left[
    \mathcal L_{\mathrm{act}}(s,c,a^\star)
\right]
+
\lambda_{\mathrm{mine}}
\mathbb E_{(s,c,a^+)\sim\mathcal D_{\mathrm{mine}}}
\left[
    \mathcal L_{\mathrm{act}}(s,c,a^+)
\right].
\label{eq:stage1_loss}
\end{equation}
For \textsc{Insert} and \textsc{Delete}, $\mathcal L_{\mathrm{act}}$ is the factorized action likelihood over operation, position, and length.
For \textsc{Remask}, it combines operation cross-entropy with multi-label binary cross-entropy over the target remask set.
For \textsc{Stop}, it reduces to operation cross-entropy.
We apply the objective as a curriculum, training first on synthetic recovery and then on mined improving actions.

\textbf{Stage 2: learning which edits improve a draft.}
Executable edits are not necessarily useful.
An insertion, deletion, or remasking action may be executable but still degrade the realized draft after denoising.
Stage~2 therefore performs outcome-guided policy improvement on task-agnostic general text $\mathcal D_{\mathrm{gen}}$, such as OpenWebText or WikiText.
Editing states are sampled from synthetic corruptions or short rollouts of the Stage~1 editor.
For each state, DELA builds executable candidate edits, evaluates their realized short-horizon outcomes during training, and uses the resulting preferences to improve both proposal and selection.
The clean target is used only to construct training preferences and is never available at inference time.
The rollout preferences supervise two coupled objectives: a selector-ranking objective and an editor-improvement objective.

Let $\pi_0$ denote a frozen reference editor initialized from Stage~1 or from its mined-imitation continuation, and let $\pi_\theta$ be the trainable editor initialized from $\pi_0$.
At state $(s,c)$, the current editor proposes a default executable action $a_\theta(s,c)$ by decoding its operation, position--length, or remask-set outputs.
We construct a small executable candidate set
\begin{equation}
    \mathcal C(s,c)
    =
    \{a_{\theta}(s,c), a_{\textsc{Stop}}\}
    \cup
    \mathcal C_{\mathrm{pol}}(s,c)
    \cup
    \mathcal C_{\mathrm{heur}}(s,c),
    \label{eq:candidate_set}
\end{equation}
where $\mathcal C_{\mathrm{pol}}$ contains high-probability actions decoded from $\pi_\theta$, and $\mathcal C_{\mathrm{heur}}$ contains legal boundary or span proposals.
Remask candidates are obtained from top-ranked token or block scores produced by the remask head.
Candidate construction uses only information available at inference time.
The clean target $x$ is used only to assign rollout-derived training preferences.
Heuristic candidates serve as legal proposal augmentations and diagnostics for estimating the value of available edits.

For each non-\textsc{Stop} candidate action $a$, DELA executes $s_1^a=T_\tau(a;s,c)$ and estimates its short-horizon consequence by rolling out $K$ additional steps with a fixed continuation policy $\rho$, yielding $(\tilde s_K^a,C_K^a)=\operatorname{Roll}_K^\rho(s_1^a,c)$.
For \textsc{Stop}, no edit or rollout is performed, so $s_1^a=\tilde s_K^a=s$ and $\mathrm{cost}(a)=C_K^a=0$.
In our default implementation, $\rho$ is a frozen reference editor initialized from Stage~1 or its mined-imitation continuation, and is not updated during Stage~2.
Each candidate is scored by a truncated improvement value:
\begin{equation}
Q_K^\rho(s,a)
=
F_{\mathrm{gen}}(\tilde s_K^a;x,c)
-
F_{\mathrm{gen}}(s;x,c)
-
\lambda_{\mathrm{edit}}
\left(\mathrm{cost}(a)+C_K^a\right),
\label{eq:truncated_action_value}
\end{equation}
where $F_{\mathrm{gen}}$ is a task-agnostic outcome score on general text.
In practice, we instantiate it with structural recovery and denoising-consistency metrics.

The rollout values define a soft preference target over the candidate set:
\begin{equation}
p^\star(a\mid s,c)
=
\frac{
\exp(Q_K^\rho(s,a)/\tau_{\mathrm{pref}})
}{
\sum_{b\in\mathcal C(s,c)}
\exp(Q_K^\rho(s,b)/\tau_{\mathrm{pref}})
}.
\label{eq:soft_preference_target}
\end{equation}
This target supervises both edit selection and proposal improvement.

The selector defines a candidate-normalized distribution
\begin{equation}
q_\psi(a\mid s,c,\mathcal C)
=
\frac{\exp h_\psi(s,c,a)}
{\sum_{b\in\mathcal C(s,c)}\exp h_\psi(s,c,b)},
\label{eq:selector_distribution}
\end{equation}
and is trained with
\begin{equation}
\mathcal L_{\mathrm{sel}}
=
-
\mathbb E_{(s,c)}
\sum_{a\in\mathcal C(s,c)}
p^\star(a\mid s,c)
\log q_\psi(a\mid s,c,\mathcal C).
\label{eq:selector_loss}
\end{equation}

To improve the editor itself, we normalize the editor distribution over the same candidate set:
\begin{equation}
\bar\pi_\theta(a\mid s,c,\mathcal C)
=
\frac{\pi_\theta(a\mid s,c)}
{\sum_{b\in\mathcal C(s,c)}\pi_\theta(b\mid s,c)}.
\label{eq:candidate_normalized_editor}
\end{equation}
For structural candidates, $\pi_\theta(a\mid s,c)$ is given by the operation--position--length factorization in Eq.~\ref{eq:edit_policy_factorization}.
For remask candidates, we combine the operation probability of \textsc{Remask} with the corresponding Bernoulli mask-set score as a surrogate action log-probability.
We distill the editor toward outcome-preferred actions while anchoring it to the frozen reference editor:
\begin{equation}
\mathcal L_{\mathrm{edit}}
\!=\!
-
\mathbb E_{(s,c)}
\!\!\!\!\sum_{a\in\mathcal C(s,c)}\!\!\!\!
p^\star(a\mid s,c)
\log \bar\pi_\theta(a\mid s,c,\mathcal C)
+
\lambda_{\mathrm{KL}}
D_{\mathrm{KL}}
\left(
\bar\pi_\theta(\cdot\mid s,c,\mathcal C)
\|
\bar\pi_0(\cdot\mid s,c,\mathcal C)
\right),
\label{eq:editor_improvement_loss}
\end{equation}
where $\bar\pi_0$ is the candidate-normalized distribution induced by the frozen reference editor over the same candidate set.
The Stage~2 objective is
\begin{equation}
\mathcal L_{\mathrm{stage2}}
=
\mathcal L_{\mathrm{sel}}
+
\eta_{\mathrm{edit}}\mathcal L_{\mathrm{edit}}.
\label{eq:stage2_loss}
\end{equation}
Stage~2 therefore improves the edit controller from realized short-horizon outcomes.
The selector learns conservative candidate selection, while the editor is distilled toward actions that lead to better general structural outcomes.

At inference time, DELA does not evaluate candidates by rolling them out against a target.
Instead, the selector scores one-step candidate actions before execution, and a conservative margin gate decides whether to apply the selected candidate.
Let $\hat a=\arg\max_{a\in\mathcal C(s,c)} h_\psi(s,c,a)$.
DELA executes $\hat a$ only when its score exceeds that of the default editor action $a_\theta$ by an operation-dependent margin:
\begin{equation}
a_t
=
\begin{cases}
\hat a, & h_\psi(s,c,\hat a)-h_\psi(s,c,a_\theta) > \gamma_{\mathrm{op}(a_{\theta})},\\
a_\theta, & \text{otherwise}.
\end{cases}
\label{eq:margin_gate}
\end{equation}
The gate limits deviations from the default editor action; by itself, it does not favor \textsc{Stop} over an edit.

\textbf{Stage 3: aligning edits with task success.}
General structural improvement does not always coincide with downstream success.
Stage~3 specializes DELA with task feedback while reusing the Stage~2 candidate construction, short-horizon rollout evaluation, selector-ranking objective, and editor-improvement objective.
The task-agnostic score $F_{\mathrm{gen}}$ is replaced with a task reward $F_{\mathrm{task}}$.
Depending on the task, $F_{\mathrm{task}}$ can be an exact-match or verifier score for reasoning, execution success or pass rate for code, or a learned preference reward for open-ended generation.
Task rewards are computed on candidate rollouts from $\mathcal D_{\mathrm{task}}$ during training and converted into preferences using Eq.~\eqref{eq:soft_preference_target}.
The task-level preferences align the edit controller with downstream success while keeping the diffusion backbone frozen.
At inference time, DELA uses the same conservative margin gate for task-aligned global refinement.

\textbf{Inference-time execution.}
At inference time, DELA first constructs a draft through append-only masked-block insertion with local remasking and then performs global refinement.
No target text or rollout-based candidate evaluation is used: the editor proposes one-step actions, the selector scores them, and the margin gate in Eq.~\eqref{eq:margin_gate} determines whether to override the default editor action.
Refinement continues until \textsc{Stop} is selected or the edit budget is exhausted.

\section{Experiments}
\label{sec:experiments}

We primarily evaluate DELA as a plug-in controller for frozen LLaDA-1.5 and additionally test its applicability to Dream-7B.
Because all backbone parameters remain fixed, any improvement must come from modifying edit trajectories over clean drafts rather than updating the denoising model.
Targets and task evaluators are used only for offline training and analysis, never for inference-time edit selection.

\subsection{Experimental Setup}
\label{subsec:exp_setup}

We evaluate DELA on mathematical reasoning and code generation.
For mathematical reasoning, we use GSM8K and MATH and report exact-match accuracy after answer extraction.
For code generation, we use HumanEval and MBPP and report pass@1 under the standard unit-test protocol.
For all methods, the initial draft is generated by the same frozen backbone under the same decoding budget.
Stage~3 uses task feedback to train the controller offline; the non-oracle benchmark variants decode with the trained controller fixed and without task-evaluator calls.
The proposal editor and selector contain approximately 200M trainable parameters in total, corresponding to about 2.5\% of the frozen 8B backbone.

We compare four main variants.
\textsc{Direct} returns the shared initial draft without global refinement.
This initial draft includes the same per-block local remasking used by DELA.
\textsc{Editor-only} applies the top action from the proposal editor.
\textsc{Editor+Selector} uses the learned selector to choose among proposed edit candidates without test-time rollout.
\textsc{Oracle} selects the best candidate using the task evaluator and serves only as an upper bound on the candidate set.
DELA-REMASK restricts global refinement to \textsc{Remask} and \textsc{Stop}.

\subsection{Main Results}
\label{subsec:main_results}

Table~\ref{tab:main_results} reports the downstream results with the frozen LLaDA-1.5 backbone.
Since the backbone is fixed for all methods, these comparisons isolate the effect of controlling edit trajectories.

\begin{table}[t]
\centering
\small
\setlength{\tabcolsep}{4.0pt}
\caption{
Main downstream performance with a frozen LLaDA-1.5 backbone.
All entries are percentages; numbers in parentheses denote absolute gains over \textsc{Direct}.
}
\label{tab:main_results}
\begin{tabular}{lcccc}
\toprule
Method & GSM8K & MATH & HumanEval & MBPP \\
\midrule
\textsc{Direct} & 80.5 & 36.7 & 37.2 & 39.6 \\
\textsc{Editor-only} & 83.6\gain{3.1} & 37.9\gain{1.2} & 39.0\gain{1.8} & 41.4\gain{1.8} \\
\textsc{Editor+Selector} & 85.7\gain{5.2} & 39.3\gain{2.6} & 42.7\gain{5.5} & 43.6\gain{4.0} \\
\textsc{Oracle} & 86.3\gain{5.8} & 39.8\gain{3.1} & 43.9\gain{6.7} & 44.0\gain{4.4} \\
\bottomrule
\end{tabular}
\end{table}

DELA consistently improves over direct decoding without updating the denoising model.
The gain from \textsc{Editor-only} shows that the learned editor can propose useful interventions, while the further improvement from \textsc{Editor+Selector} shows that deciding which proposed edits to execute is important.
The remaining gap to \textsc{Oracle} indicates that useful candidates are sometimes available but are not selected by the learned selector.

\subsection{Comparison with Inference-Time Refinement}
\label{subsec:inference_comparison}

We next compare DELA with alternative inference-time decoding and refinement strategies on HumanEval under the same frozen LLaDA-1.5 backbone and inference environment in Table~\ref{tab:inference_comparison}.
This comparison separates the effect of DELA's learned edit policy from differences in backbone training or model scale. NFE denotes the number of backbone forward evaluations.
The NFE comparison includes the shared drafting phase; the additional evaluations for DELA come from global refinement.

\begin{table}[t]
\centering
\small
\setlength{\tabcolsep}{4.2pt}
\caption{
Inference-time comparison on HumanEval using the same frozen LLaDA-1.5 backbone.
}
\label{tab:inference_comparison}
\begin{tabular}{lccc}
\toprule
Method & pass@1 & NFE & Latency (s) \\
\midrule
\textsc{Direct} & 37.2 & 256 & 5.43 \\
Fast-dLLM & 38.4 & 87 & 1.72 \\
Tolerator-style & 38.4 & 275 & 5.46 \\
ReMDM & 38.4 & 512 & 10.61 \\
DELA-REMASK & 40.9 & 261 & 5.57 \\
DELA & \textbf{42.7} & 264 & 5.74 \\
\bottomrule
\end{tabular}
\end{table}

The baselines expose different accuracy--cost trade-offs.
Fast-dLLM substantially reduces the denoising budget while obtaining a modest improvement over \textsc{Direct}, whereas Tolerator-style refinement and ReMDM reach the same 38.4\% pass@1 with additional denoising.
In contrast, DELA-REMASK reaches 40.9\% with only five additional NFEs over \textsc{Direct}, showing that learned targeted remasking already captures most of the refinement benefit.
Enabling the full edit space further improves performance to 42.7\% with only eight additional NFEs.
Within the tested budgets, these results show that a small number of targeted interventions can improve generation quality. They do not isolate the effect of the learned edit policy from the extra compute relative to \textsc{Direct}.

We also test whether this behavior is specific to LLaDA-1.5.
Using the same DELA formulation with a separately trained controller over frozen Dream-7B, DELA improves HumanEval from 41.5\% to 43.9\% and MBPP from 57.8\% to 60.6\%.
The smaller but consistent gains on Dream provide initial evidence that the editing formulation is applicable across different diffusion backbones rather than relying on a particular LLaDA decoding implementation.
Additional decoding comparisons and inference-time sensitivity analyses are provided in Appendix~\ref{app:additional_decoding} and Appendix~\ref{app:inference_sensitivity}.

\subsection{Controller and Editing Analysis}
\label{subsec:controller_analysis}

We analyze which components drive DELA's improvement and how the learned controller behaves in practice.
Table~\ref{tab:component_analysis} summarizes the main system-level ablations on HumanEval.

\begin{table}[t]
\centering
\small
\setlength{\tabcolsep}{6.0pt}
\caption{
Component analysis on HumanEval with the frozen LLaDA-1.5 backbone.
}
\label{tab:component_analysis}
\begin{tabular}{lcc}
\toprule
Configuration & \# Correct & pass@1 \\
\midrule
Stage~1 & 18 & 11.0 \\
Stage~1 + Stage~2 & 61 & 37.2 \\
DELA-REMASK & 67 & 40.9 \\
DELA & 70 & 42.7 \\
\textsc{Oracle} & 72 & 43.9 \\
\bottomrule
\end{tabular}
\end{table}

\textbf{Training stages.}
Stage~1 teaches the controller to construct executable edits, but not whether an intervention is beneficial on naturally generated drafts.
When applied directly to HumanEval, the Stage~1 policy edits every example, breaking 43 previously correct drafts while fixing none.
Stage~2 recovers the performance of \textsc{Direct}, suggesting that task-agnostic outcome supervision helps avoid the harmful edits made by the Stage~1 policy.
Stage~3 then aligns the outcome-aware policy with downstream task success, yielding the full DELA model.
Together with the gap between \textsc{Editor-only} and \textsc{Editor+Selector} in Table~\ref{tab:main_results}, these results show that useful proposals alone are insufficient; DELA must also learn which edits are worth executing.

\textbf{Edit behavior.}
DELA-REMASK improves HumanEval from 37.2\% to 40.9\%, while the full action space further improves performance to 42.7\%.
The REMASK-only result suggests that targeted remasking accounts for most of the gain.
This pattern is reflected in the executed actions: across HumanEval, DELA performs 24 REMASK, 2 INSERT, and 1 DELETE actions.
Overall, DELA edits only 15 of 164 outputs (9.1\%).
Among edited drafts, the median intervention changes 12 tokens, approximately 6\% of the draft.
These results indicate that DELA primarily performs sparse, targeted repair rather than broad rewriting.

\textbf{Two-phase efficiency.}
The two-phase design also limits the cost of refinement.
In a separate batch-size-one profile, two-phase drafting and refinement requires 0.461\,s per sample, compared with 0.823\,s when global edit control is invoked after every generated block.
Delaying non-monotonic refinement until after draft construction therefore avoids repeatedly invoking the global controller during generation.
Additional diagnostics for proposal quality, edit selection, edit behavior, and two-phase efficiency are provided in Appendix~\ref{app:additional}.

\section{Conclusion}
\label{sec:conclusion}

We introduced \textbf{DELA}, a plug-in framework for bringing explicit draft-level editing to frozen diffusion language models.
DELA keeps the pretrained diffusion backbone fixed and learns an external editor that refines clean drafts through structural edit actions, thereby decoupling draft-level control from token-level denoising.
To make the editing policy learnable, we design a staged curriculum that first teaches local edit recovery and then incorporates outcome-guided signals to favor edits that improve downstream task performance.
Experiments on reasoning and code generation show that DELA can improve over direct diffusion decoding without retraining the underlying backbone.
Our study is limited to two diffusion backbones and a focused task set; broader evaluation across scales and generation settings remains future work.

\clearpage
\subsection*{AI use statement}

We used generative AI tools primarily for language polishing, including improving grammar, clarity, and readability of the manuscript.
The research ideas, methodology, experiments, analysis, and conclusions were developed by the authors.
All AI-assisted text was reviewed and edited by the authors, who take full responsibility for the final content of the paper.

\bibliography{dllm}
\bibliographystyle{iclr2027_conference}

\newpage
\appendix

\section{Theoretical Insights}

\subsection{Why Singleton-Remask Consistency Works}
\label{app:qgen_theory}

We now analyze the remask-based denoising-consistency score $Q_{\mathrm{gen}}$, a component of the task-agnostic score $F_{\mathrm{gen}}$ used in the main text.
The main practical score uses general multi-token or blockwise remask patterns, but for analysis we focus on the singleton-remask case, where exactly one position is remasked at a time.
This setting admits the cleanest statistical interpretation: the corresponding population objective is exactly the conditional pseudo-log-likelihood of the data distribution.
We then show why this objective is a principled surrogate for joint likelihood: for strictly positive discrete distributions, matching all full-conditionals is sufficient to recover the same joint distribution.

\paragraph{Setup.}
Fix a conditioning context $c$ and a length-$n$ sequence $\mathbf{x}=(x_1,\dots,x_n)\in\mathcal{V}^n$.
Let $q^\star(\mathbf{x}\mid c)$ denote the data distribution, assumed strictly positive:
\[
q^\star(\mathbf{x}\mid c) > 0
\qquad
\text{for all } \mathbf{x}\in\mathcal{V}^n.
\]
For each position $i\in\{1,\dots,n\}$, let $\mathbf{x}_{\setminus i}$ denote the partially observed sequence obtained by masking out the $i$-th token.

Assume that the frozen absorbing diffusion backbone induces singleton-remask reconstruction distributions
\[
p_\phi(x_i \mid \mathbf{x}_{\setminus i}, c),
\]
which we interpret as model full-conditionals.

Define the singleton-remask score
\begin{equation}
Q_{\mathrm{gen}}^{(1)}(\mathbf{x},c)
=
\frac{1}{n}\sum_{i=1}^n
\log p_\phi(x_i \mid \mathbf{x}_{\setminus i}, c),
\label{eq:qgen_singleton}
\end{equation}
and the corresponding population objective
\begin{equation}
\mathcal{J}(\phi)
=
\mathbb{E}_{(\mathbf{x},c)\sim q^\star}
\left[
Q_{\mathrm{gen}}^{(1)}(\mathbf{x},c)
\right].
\label{eq:Jphi_def}
\end{equation}

\begin{theorem}[Population consistency of singleton-remask pseudo-likelihood]
\label{thm:singleton_pl_consistency}
For every $\phi$,
\begin{equation}
\mathcal{J}(\phi)
=
\mathcal{C}
-
\frac{1}{n}\sum_{i=1}^n
\mathbb{E}_{(\mathbf{x}_{\setminus i},c)\sim q^\star}
\left[
\mathrm{KL}\!\left(
q^\star(\cdot \mid \mathbf{x}_{\setminus i},c)
\,\middle\|\,
p_\phi(\cdot \mid \mathbf{x}_{\setminus i},c)
\right)
\right],
\label{eq:singleton_pl_kl}
\end{equation}
where
\begin{equation}
\mathcal{C}
=
\frac{1}{n}\sum_{i=1}^n
\mathbb{E}_{(\mathbf{x},c)\sim q^\star}
\left[
\log q^\star(x_i \mid \mathbf{x}_{\setminus i},c)
\right]
\label{eq:C_def}
\end{equation}
is independent of $\phi$.

Hence $\mathcal{J}(\phi)\le \mathcal{C}$ for all $\phi$, with equality if and only if
\[
p_\phi(\cdot \mid \mathbf{x}_{\setminus i},c)
=
q^\star(\cdot \mid \mathbf{x}_{\setminus i},c)
\quad
\text{for all } i
\text{ and } q^\star\text{-a.e. }(\mathbf{x}_{\setminus i},c).
\]
In particular, maximizing the singleton-remask objective is exactly equivalent to maximizing the conditional pseudo-log-likelihood of the data distribution.
\end{theorem}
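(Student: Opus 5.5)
The plan is to prove the identity termwise in $i$ by adding and subtracting the true log full-conditional, and then take the conditional expectation over $x_i$.

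\textbf{Decomposition.} Fix $i$. By linearity and the definition of $Q_{\mathrm{gen}}^{(1)}$,
\[
\mathcal{J}(\phi)=\frac{1}{n}\sum_{i=1}^n \mathbb{E}_{(\mathbf{x},c)\sim q^\star}\bigl[\log p_\phi(x_i\mid \mathbf{x}_{\setminus i},c)\bigr].
\]
Strict positivity of $q^\star$ guarantees that $q^\star(\cdot\mid \mathbf{x}_{\setminus i},c)$ is well defined and everywhere positive. Hence $\log q^\star(x_i\mid\mathbf{x}_{\setminus i},c)$ is finite, and we may write
\[
\log p_\phi(x_i\mid\cdot)=\log q^\star(x_i\mid\cdot)-\log\frac{q^\star(x_i\mid\cdot)}{p_\phi(x_i\mid\cdot)}.
\]
The first term, summed and averaged, gives $\mathcal{C}$, which does not involve $\phi$.

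\textbf{Conditioning.} For the second term, factor $q^\star(\mathbf{x},c)=q^\star(\mathbf{x}_{\setminus i},c)\,q^\star(x_i\mid\mathbf{x}_{\setminus i},c)$ and apply the tower property. The inner expectation over $x_i\sim q^\star(\cdot\mid \mathbf{x}_{\setminus i},c)$ is exactly
\[
\mathrm{KL}\bigl(q^\star(\cdot\mid\mathbf{x}_{\setminus i},c)\,\|\,p_\phi(\cdot\mid\mathbf{x}_{\setminus i},c)\bigr).
\]
This yields \eqref{eq:singleton_pl_kl}. Everything is a finite sum over $\mathcal{V}^n$ for each $c$, so integrability is only an issue through $c$.

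\textbf{Extended values.} The one point needing care is that $p_\phi$ might vanish at some token. In that case the KL is $+\infty$ and $\mathcal{J}=-\infty$, so the identity still holds in the extended reals. I would state this convention explicitly, and assume $\mathcal{C}$ is finite, which is automatic for finite $\mathcal{V}$ since $\mathcal{C}$ is minus an average conditional entropy bounded by $\log|\mathcal{V}|$.

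\textbf{Bound and equality.} Gibbs' inequality gives $\mathrm{KL}\ge 0$, with equality iff the two conditionals coincide. Every term in the sum is nonnegative, so $\mathcal{J}(\phi)\le\mathcal{C}$. Equality holds iff each expected KL is zero. Since each KL is nonnegative, that means each KL vanishes $q^\star$-a.e. in $(\mathbf{x}_{\setminus i},c)$, which is the stated matching condition.

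\textbf{Final claim.} Because $\mathcal{C}$ is independent of $\phi$, maximizing $\mathcal{J}$ is equivalent to minimizing the summed expected conditional KLs. This is the standard characterization of the conditional pseudo-log-likelihood objective.

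I do not expect a real obstacle. The only delicate points are the $-\infty$/$+\infty$ bookkeeping when $p_\phi$ has zeros, and the measurability and finiteness of the expectation over $c$, which I would handle by the extended-real convention above.
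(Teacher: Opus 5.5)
Your proposal is correct and takes essentially the same route as the paper. Both expand $\mathcal{J}(\phi)$ termwise in $i$, condition on $(\mathbf{x}_{\setminus i},c)$ via the tower property, add and subtract $\log q^\star$ to split off the constant $\mathcal{C}$ and the expected KL, and then read off the bound and equality case from Gibbs' inequality; the paper conditions first and then adds and subtracts, which is immaterial. Your extended-real handling of zeros of $p_\phi$ (relevant only on sets of positive $q^\star$-measure) and your remark that $\mathcal{C}$ is finite, being minus an average conditional entropy bounded by $\log|\mathcal{V}|$, are sensible refinements that the paper's proof leaves implicit.
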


\begin{proof}
By definition,
\begin{align}
\mathcal{J}(\phi)
&=
\frac{1}{n}\sum_{i=1}^n
\mathbb{E}_{(\mathbf{x},c)\sim q^\star}
\left[
\log p_\phi(x_i \mid \mathbf{x}_{\setminus i},c)
\right].
\label{eq:J_expand}
\end{align}
Fix any position $i$.
Using conditional expectation,
\begin{align}
\mathbb{E}_{(\mathbf{x},c)\sim q^\star}
\left[
\log p_\phi(x_i \mid \mathbf{x}_{\setminus i},c)
\right]
&=
\mathbb{E}_{(\mathbf{x}_{\setminus i},c)\sim q^\star}
\left[
\sum_{v\in\mathcal{V}}
q^\star(v \mid \mathbf{x}_{\setminus i},c)
\log p_\phi(v \mid \mathbf{x}_{\setminus i},c)
\right].
\label{eq:tower}
\end{align}
Add and subtract the data conditional log-probability:
\begin{align}
&\sum_{v\in\mathcal{V}}
q^\star(v \mid \mathbf{x}_{\setminus i},c)
\log p_\phi(v \mid \mathbf{x}_{\setminus i},c)
\nonumber\\
&=
\sum_{v\in\mathcal{V}}
q^\star(v \mid \mathbf{x}_{\setminus i},c)
\log q^\star(v \mid \mathbf{x}_{\setminus i},c)
\nonumber\\
&\qquad
-
\mathrm{KL}\!\left(
q^\star(\cdot \mid \mathbf{x}_{\setminus i},c)
\,\middle\|\,
p_\phi(\cdot \mid \mathbf{x}_{\setminus i},c)
\right).
\label{eq:kl_identity}
\end{align}
Substituting Eq.~\eqref{eq:kl_identity} into Eq.~\eqref{eq:tower} and then into Eq.~\eqref{eq:J_expand} yields Eq.~\eqref{eq:singleton_pl_kl}.
The equality condition follows from the nonnegativity of KL divergence.
\end{proof}

\begin{corollary}[Why pseudo-likelihood works as a surrogate for joint likelihood]
\label{cor:why_pl_works}
Assume furthermore that the model family can represent the data full-conditionals and that, for each context $c$ with $q^\star(c)>0$, the family of model full-conditionals
\[
\left\{
p_\phi(x_i \mid \mathbf{x}_{\setminus i},c)
\right\}_{i=1}^n
\]
is compatible with a unique strictly positive joint distribution
$p_\phi(\mathbf{x}\mid c)$.
If $\phi^\star$ maximizes $\mathcal{J}(\phi)$, then
\[
p_{\phi^\star}(\mathbf{x}\mid c)=q^\star(\mathbf{x}\mid c)
\qquad
\text{for all }\mathbf{x}\in\mathcal{V}^n\text{ and }c\text{ with }q^\star(c)>0.
\]
Therefore, although singleton-remask training optimizes only local full-conditionals, it identifies the same target joint distribution as joint maximum likelihood.
\end{corollary}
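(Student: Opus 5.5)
The plan is to combine Theorem~\ref{thm:singleton_pl_consistency} with the representability and compatibility assumptions, and then to use a Brook-type uniqueness argument: a strictly positive joint is determined by its full-conditionals.

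\emph{Step 1: the maximizer matches all data conditionals.} Since the model family can represent the data full-conditionals, some $\phi_0$ has $p_{\phi_0}(\cdot\mid\mathbf{x}_{\setminus i},c)=q^\star(\cdot\mid\mathbf{x}_{\setminus i},c)$ for every $i$. By Eq.~\eqref{eq:singleton_pl_kl}, $\mathcal{J}(\phi_0)=\mathcal{C}$. Because $\mathcal{J}\le\mathcal{C}$ always holds, any maximizer $\phi^\star$ also attains $\mathcal{J}(\phi^\star)=\mathcal{C}$. The equality clause of Theorem~\ref{thm:singleton_pl_consistency} then gives
\[
p_{\phi^\star}(\cdot\mid\mathbf{x}_{\setminus i},c)=q^\star(\cdot\mid\mathbf{x}_{\setminus i},c)
\]
for all $i$ and $q^\star$-a.e.\ $(\mathbf{x}_{\setminus i},c)$. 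Strict positivity of $q^\star(\mathbf{x}\mid c)$ makes every $\mathbf{x}_{\setminus i}$ have positive mass given $c$. So for each $c$ with $q^\star(c)>0$, ``a.e.'' upgrades to ``for every $\mathbf{x}_{\setminus i}$''.

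\emph{Step 2: compatibility identifies the joint.} The data joint $q^\star(\cdot\mid c)$ is strictly positive. By Step~1, its full-conditionals coincide with the model's full-conditionals $\{p_{\phi^\star}(x_i\mid\mathbf{x}_{\setminus i},c)\}_i$. Hence $q^\star(\cdot\mid c)$ is a strictly positive joint compatible with that family. The hypothesis says this family is compatible with a unique strictly positive joint, namely $p_{\phi^\star}(\cdot\mid c)$, so $p_{\phi^\star}(\mathbf{x}\mid c)=q^\star(\mathbf{x}\mid c)$.

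\emph{Step 3 (optional): why uniqueness is automatic.} One can also show that the uniqueness hypothesis is not an extra assumption, via Brook's lemma. Fix a reference $\mathbf{y}$. For any positive joint $p$,
\[
\frac{p(\mathbf{x})}{p(\mathbf{y})}=\prod_{i=1}^n\frac{p(x_i\mid x_{<i},y_{>i})}{p(y_i\mid x_{<i},y_{>i})}.
\]
Each factor is a ratio of full-conditionals evaluated at points where positivity guarantees they are defined and nonzero. Normalizing then pins down $p$. Thus any two strictly positive joints with the same full-conditionals coincide.

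The main obstacle is the measure-theoretic step in Step~1: passing from $q^\star$-a.e.\ equality to pointwise equality on every configuration. This is exactly where strict positivity of $q^\star$ is needed. Brook's telescoping product needs the conditionals at arbitrary mixed configurations $(x_{<i},y_{>i})$, which need not be typical under the data. I will handle this by noting that in a finite space every configuration carries positive mass, so the null set is empty. The case of context $c$ is handled fiberwise, restricting to $q^\star(c)>0$.
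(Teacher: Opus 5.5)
Your proof is correct and follows the paper's route. Realizability plus the bound $\mathcal{J}\le\mathcal{C}$ forces the maximizer to match all data full-conditionals, and Brook's telescoping product (your optional Step~3, which is the paper's main step) then identifies the joint; your Step~2 shortcuts this by invoking the uniqueness hypothesis directly, and your explicit upgrade from $q^\star$-a.e.\ to pointwise equality via strict positivity supplies a detail the paper leaves implicit but its Brook argument genuinely needs.
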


\begin{proof}
By the realizability assumption, some parameter attains $\mathcal J(\phi)=\mathcal C$.
Theorem~\ref{thm:singleton_pl_consistency} shows that $\mathcal C$ is an upper bound, so any maximizer $\phi^\star$ also attains it and matches the data full-conditionals:
\[
p_{\phi^\star}(x_i \mid \mathbf{x}_{\setminus i},c)
=
q^\star(x_i \mid \mathbf{x}_{\setminus i},c)
\]
for every $i$ and $q^\star$-almost every $(\mathbf{x},c)$.

It remains to show why equality of all full-conditionals determines the joint distribution.
Fix a reference state
\[
\mathbf{x}^{(0)}=(a_1,\dots,a_n)\in\mathcal{V}^n.
\]
For any target state $\mathbf{x}=(x_1,\dots,x_n)$, define the path
\[
\mathbf{x}^{(k)}=(x_1,\dots,x_k,a_{k+1},\dots,a_n),
\qquad k=0,1,\dots,n.
\]
Then $\mathbf{x}^{(0)}$ and $\mathbf{x}^{(n)}=\mathbf{x}$ are connected by a sequence of single-coordinate changes.
For any strictly positive joint distribution $p$, we have
\begin{align}
\frac{p(\mathbf{x})}{p(\mathbf{x}^{(0)})}
&=
\prod_{k=1}^n
\frac{p(\mathbf{x}^{(k)})}{p(\mathbf{x}^{(k-1)})}
\nonumber\\
&=
\prod_{k=1}^n
\frac{
p\!\left(x_k \mid x_1,\dots,x_{k-1},a_{k+1},\dots,a_n\right)
}{
p\!\left(a_k \mid x_1,\dots,x_{k-1},a_{k+1},\dots,a_n\right)
}.
\label{eq:ratio_path}
\end{align}
Thus, all probability ratios relative to the reference state are completely determined by the full-conditionals.

Applying Eq.~\eqref{eq:ratio_path} to both $p_{\phi^\star}(\mathbf{x}\mid c)$ and $q^\star(\mathbf{x}\mid c)$, and using the fact that their full-conditionals are equal, yields
\[
\frac{p_{\phi^\star}(\mathbf{x}\mid c)}{p_{\phi^\star}(\mathbf{x}^{(0)}\mid c)}
=
\frac{q^\star(\mathbf{x}\mid c)}{q^\star(\mathbf{x}^{(0)}\mid c)}
\qquad
\text{for all }\mathbf{x}\in\mathcal{V}^n.
\]
Hence there exists a constant $Z(c)>0$ such that
\[
p_{\phi^\star}(\mathbf{x}\mid c)=Z(c)\,q^\star(\mathbf{x}\mid c)
\qquad
\text{for all }\mathbf{x}\in\mathcal{V}^n.
\]
Since both distributions are normalized,
\[
1=\sum_{\mathbf{x}}p_{\phi^\star}(\mathbf{x}\mid c)
=Z(c)\sum_{\mathbf{x}}q^\star(\mathbf{x}\mid c)
=Z(c),
\]
so $Z(c)=1$.
Therefore
\[
p_{\phi^\star}(\mathbf{x}\mid c)=q^\star(\mathbf{x}\mid c)
\qquad
\text{for all }\mathbf{x}\in\mathcal{V}^n.
\]
\end{proof}

\paragraph{Remark.}
The theorem above explains why pseudo-likelihood works.
It does not need to equal the normalized joint log-likelihood pointwise.
Instead, its population optimum matches the data full-conditionals, and for strictly positive discrete distributions these full-conditionals determine the entire joint distribution.
This is why pseudo-likelihood has long been used as a tractable surrogate when the exact joint likelihood is difficult to compute.
This result concerns fixed-length singleton remasking; it does not establish that edits selected using the practical score improve downstream task outcomes.

In the main DELA reward, we use more general multi-token or blockwise remask
patterns:
\[
Q_{\mathrm{gen}}(s,c)
=
\mathbb{E}_{M\sim\mu(\cdot\mid s)}
\left[
\frac{1}{|M|}
\sum_{i\in M}
\log p_\phi(s_i \mid \mathrm{mask}(s,M), c)
\right].
\]
These practical variants are motivated by the singleton-remask theory above, but should be viewed as conditional-consistency surrogates rather than exact normalized joint log-likelihood estimators.

\section{Implementation Details}
\label{app:implementation}

\subsection{Experimental Configuration}
\label{app:experimental_configuration}

Unless otherwise stated, all experiments use LLaDA-1.5 as the frozen denoising backbone.
The backbone parameters are never updated.
We train only the DELA edit-control modules, including the proposal editor and the selector.
The \textsc{Direct} baseline uses the same initial-draft pipeline as DELA, including local per-block remasking, and returns the draft before global refinement.
In our current implementation, this initial draft is generated with $\texttt{gen\_length}=256$, $\texttt{steps}=256$, and $\texttt{block\_length}=16$.
NFE counts calls to the frozen backbone across the full decoding procedure, including local remasking during drafting and any subsequent global refinement. Thus, the 256 NFEs reported for \textsc{Direct} cover the shared drafting phase, while the eight additional NFEs reported for DELA arise during global refinement.
Table~\ref{tab:impl_config} summarizes the main implementation hyperparameters used by the training and decoding algorithms.

\begin{table}[htbp]
\centering
\caption{Main implementation hyperparameters for DELA.}
\label{tab:impl_config}
\begin{tabular}{ll}
\toprule
Hyperparameter & Value \\
\midrule
Denoising backbone & LLaDA-1.5 \\
Backbone update & Frozen \\
Denoising operator & $\mathcal D_\tau$ \\
Initial draft length & $\texttt{gen\_length}=256$ \\
Initial denoising iterations & $\texttt{steps}=256$ \\
Initial block length & $\texttt{block\_length}=16$ \\
Confidence threshold & $\tau=0.9$ \\
Edit refinement budget & $T_{\max}=2$ actions \\
Candidate rollout horizon & $K=1$ edit step \\
Stage2 selector margin & $\gamma_{\mathrm{op}}\in\{0.10,0.15\}$ \\
Stage3 selector margin & $\gamma_{\mathrm{op}}=0$ unless otherwise stated \\
Trainable modules & Proposal editor $\pi_\theta$, selector $h_\psi$ \\
\bottomrule
\end{tabular}
\end{table}

For conservative selector diagnostics, we additionally report fixed-margin variants.
These margins are specified in the corresponding result tables.
During final task evaluation, the selector scores executable candidates using learned logits only.
Rollout rewards, verifier feedback, target answers, and oracle candidate scores are used only for training or diagnostic upper bounds, never for test-time candidate selection.
For Stage3 task evaluation, we disable test-time rollout reranking by setting $\texttt{selector\_rollout\_rerank\_topk}=0$.

\subsection{Compute Resources}
\label{app:compute_resources}

We report the compute resources used in our experiments to support reproducibility.
All main experiments were conducted on a workstation with 8 NVIDIA RTX PRO 6000 GPUs.
The LLaDA-1.5 backbone was kept frozen throughout all experiments, and training only updated the DELA edit-control modules.
Unless otherwise stated, training and evaluation used the same machine.

\begin{table}[htbp]
\centering
\caption{Compute resources used for DELA experiments.}
\label{tab:compute_resources}
\begin{tabular}{lcc}
\toprule
Experiment & GPUs & Trainable components \\
\midrule
Stage 1: learning executable edit proposals & 8 RTX PRO 6000 & Proposal editor \\
Stage 2: learning which edits improve a draft & 8 RTX PRO 6000 & Proposal editor, selector \\
Stage 3: aligning edits with task success & 8 RTX PRO 6000 & Proposal editor, selector \\
Downstream evaluation & 8 RTX PRO 6000 & None \\
\bottomrule
\end{tabular}
\end{table}

\subsection{Training Algorithm}
\label{app:training_algorithm}

Algorithm~\ref{alg:dela_training} summarizes the DELA training curriculum.
The curriculum first learns executable edit proposals through imitation, then uses mined improving actions to strengthen the warm start.
Subsequent stages use rollout-derived preferences and downstream feedback to train outcome-aware edit selection.

\begin{algorithm}[htbp]
\caption{DELA training curriculum}
\label{alg:dela_training}
\begin{algorithmic}[1]
\REQUIRE General corpus $\mathcal D_{\mathrm{gen}}$, task corpus $\mathcal D_{\mathrm{task}}$, frozen denoiser $\mathcal D_\tau$, editor $\pi_\theta$, selector $h_\psi$
\STATE \textbf{Stage 1: synthetic edit imitation}
\STATE Construct synthetic recovery dataset $\mathcal D_{\mathrm{syn}}$ from $\mathcal D_{\mathrm{gen}}$.
\FOR{each labeled example $(s,c,a^\star)\sim\mathcal D_{\mathrm{syn}}$}
    \STATE Update $\pi_\theta$ with the action imitation loss $\mathcal L_{\mathrm{act}}(s,c,a^\star)$.
\ENDFOR

\STATE \textbf{Stage 1.5: mined improving-action distillation}
\STATE Mine improving actions $\mathcal D_{\mathrm{mine}}$ from task-agnostic text states using the frozen denoiser.
\FOR{each mined example $(s,c,a^+)\sim\mathcal D_{\mathrm{mine}}$}
    \STATE Update $\pi_\theta$ with $\mathcal L_{\mathrm{act}}(s,c,a^+)$.
\ENDFOR
\STATE Set frozen reference editor $\pi_0\leftarrow\pi_\theta$.
\STATE Initialize the trainable editor from $\pi_0$.

\STATE \textbf{Stage 2: general outcome-guided improvement}
\FOR{each editing state $(s,c,x)$ from $\mathcal D_{\mathrm{gen}}$}
    \STATE Build executable candidate set $\mathcal C(s,c)$ using Eq.~\ref{eq:candidate_set}.
    \FOR{each candidate $a\in\mathcal C(s,c)$}
        \IF{$a=\textsc{Stop}$}
            \STATE Set $s_1^a=\tilde s_K^a=s$ and $C_K^a=0$.
        \ELSE
            \STATE Execute $s_1^a=T_\tau(a;s,c)$ and roll out $(\tilde s_K^a,C_K^a)=\operatorname{Roll}_K^\rho(s_1^a,c)$ with reference policy $\rho$.
        \ENDIF
        \STATE Compute $Q_K^\rho(s,a)$ using Eq.~\ref{eq:truncated_action_value}.
    \ENDFOR
    \STATE Convert rollout values to $p^\star(a\mid s,c)$ using Eq.~\ref{eq:soft_preference_target}.
    \STATE Update $h_\psi$ and $\pi_\theta$ with Eq.~\ref{eq:stage2_loss}.
\ENDFOR

\STATE \textbf{Stage 3: task-specific alignment}
\FOR{each editing state $(s,c,x)$ from $\mathcal D_{\mathrm{task}}$}
    \STATE Repeat candidate construction and short-horizon rollout evaluation as in Stage~2.
    \STATE Replace $F_{\mathrm{gen}}$ with task reward $F_{\mathrm{task}}$.
    \STATE Update the same outcome-guided objectives with task-level preferences.
\ENDFOR
\end{algorithmic}
\end{algorithm}

\subsection{Decoding Algorithm}
\label{app:decoding}

Algorithm~\ref{alg:dela_decoding} details DELA decoding.
The procedure first constructs a draft through append-only insertion with local per-block remasking, and then performs global refinement with the full edit action space.
The fast-drafting steps, including local remask selection, are shared by \textsc{Direct} and DELA.
Each edit-conditioned denoising call has a finite iteration budget and resolves any masks remaining at its final iteration by argmax prediction.
During refinement, the selector can override the default action of the editor only when the margin gate is satisfied, keeping decoding conservative.

\begin{algorithm}[htbp]
\caption{DELA decoding with fast drafting and global refinement}
\label{alg:dela_decoding}
\begin{algorithmic}[1]
\REQUIRE Prompt/context $c$, frozen denoiser $\mathcal D_\tau$, editor $\pi_\theta$, selector $h_\psi$, draft budget $B_{\max}$, refinement budget $T_{\max}$
\STATE Initialize $s\leftarrow \emptyset$

\STATE \textbf{Fast drafting}
\FOR{$b=1,\ldots,B_{\max}$}
    \STATE Select append-only action $a_b=\textsc{Insert}(\mathrm{pos}=|s|,\mathrm{len}=\ell_b)$.
    \STATE Apply edit--denoise transition $s\leftarrow T_\tau(a_b;s,c)$.
    \STATE Let $\mathcal B_b$ denote the newly generated block.
    \STATE Predict local remask scores on $\mathcal B_b$ and select $M_b\subseteq \mathcal B_b$.
    \IF{$M_b\neq \emptyset$}
        \STATE Apply local remask transition $s\leftarrow T_\tau(\textsc{Remask}(M_b);s,c)$.
    \ENDIF
    \IF{draft stopping criterion is met}
        \STATE \textbf{break}
    \ENDIF
\ENDFOR

\STATE \textbf{Global refinement}
\FOR{$t=1,\ldots,T_{\max}$}
    \STATE Compute default editor action $a_\theta=\arg\max_a \pi_\theta(a\mid s,c)$.
    \STATE Build candidate set $\mathcal C(s,c)$ using Eq.~\ref{eq:candidate_set}.
    \STATE Ensure $a_\theta\in\mathcal C(s,c)$.
    \STATE Select $\hat a=\arg\max_{a\in\mathcal C(s,c)} h_\psi(s,c,a)$.
    \IF{$h_\psi(s,c,\hat a)-h_\psi(s,c,a_\theta)>\gamma_{\mathrm{op}(a_\theta)}$}
        \STATE $a_t\leftarrow \hat a$
    \ELSE
        \STATE $a_t\leftarrow a_\theta$
    \ENDIF
    \IF{$a_t=\textsc{Stop}$}
        \STATE \textbf{break}
    \ENDIF
    \STATE Apply edit--denoise transition $s\leftarrow T_\tau(a_t;s,c)$.
\ENDFOR

\STATE \textbf{return} $s$
\end{algorithmic}
\end{algorithm}

\section{Additional Results}
\label{app:additional}

\subsection{Stage 1: Synthetic Recovery}
\label{app:synthetic_recovery}

Stage~1 provides a supervised initialization for the proposal editor through synthetic recovery.
Given a clean sequence, we construct a corrupted draft using synthetic structural perturbations and train the editor to predict a recovery action that moves the corrupted draft back toward the clean target.
For example, deleting a span from the clean sequence induces an \textsc{Insert} recovery action, while inserting a distractor span induces a \textsc{Delete} recovery action.
This diagnostic evaluates whether the editor has learned the basic syntax and semantics of executable structural edits, including the operation type, edit position, and span length.

Table~\ref{tab:synthetic_recovery} reports the Stage~1 synthetic recovery results.
The editor achieves high operation and position accuracy, indicating that it can identify the intended edit type and approximate edit location.
Length prediction is more challenging, especially for insertion examples, because multiple nearby span lengths can often lead to similar repaired states.
For this reason, we report tolerance-based length and joint metrics in addition to exact joint accuracy.
These results should be interpreted as action-grammar diagnostics rather than evidence of downstream task improvement.

\begin{table}[t]
\centering
\caption{
Stage~1 synthetic recovery diagnostics.
The editor is evaluated on synthetic corrupted drafts and predicts the recovery action toward the clean target.
These metrics measure whether the editor learns executable edit syntax, not whether the resulting policy improves downstream generation.
}
\label{tab:synthetic_recovery}
\small
\setlength{\tabcolsep}{3pt}
\begin{tabular}{lcccccccc}
\toprule
Scope & Op acc. & Pos acc. & Op+Pos acc. & Len $\pm 1$ & Len $\pm 2$ & Joint $\pm 1$ & Joint $\pm 2$ & Exact joint \\
\midrule
Overall & 89.5 & 83.5 & 74.8 & 41.5 & 59.8 & 30.3 & 45.8 & 12.3 \\
Insert  & 95.5 & 70.0 & 67.5 & 27.5 & 43.5 & 19.0 & 31.0 & 4.0  \\
Delete  & 83.5 & 97.0 & 82.0 & 55.5 & 76.0 & 41.5 & 60.5 & 20.5 \\
\bottomrule
\end{tabular}
\end{table}

\subsection{Stage 1: Proposal Quality}
\label{app:stage1_proposal}

Synthetic recovery teaches the editor how to construct executable actions, but does not establish whether these actions are useful under the frozen denoiser.
We therefore evaluate the effect of the mined improving-action distillation used in the second part of Stage~1.

Table~\ref{tab:stage1_bridge} compares the editor trained only on synthetic recovery with the final Stage~1 editor after mined-positive distillation.
Mined-positive measures the fraction of examples for which the candidate set contains a rollout-mined improving action, while Top-1 improves measures whether the highest-probability editor action improves the completed draft.
Large gain denotes an improvement greater than $0.02$ under the rollout score.

\begin{table}[t]
\centering
\small
\setlength{\tabcolsep}{3.2pt}
\caption{
Stage~1 proposal diagnostics, averaged over two seeds.
Large gain denotes $\Delta > 0.02$.
All entries except Avg. gain are percentages.
}
\label{tab:stage1_bridge}
\begin{tabular}{lcccc}
\toprule
Policy & Mined-positive & Top-1 improves & Large gain & Avg. gain \\
\midrule
Synthetic recovery only & 38.3 & 50.0 & 14.8 & 0.0070 \\
Final Stage~1 editor & 84.4 & 87.5 & 57.8 & 0.0230 \\
\bottomrule
\end{tabular}
\end{table}

Mined-positive distillation substantially improves proposal quality.
The fraction of examples containing an improving proposal increases from 38.3\% to 84.4\%, while the fraction for which the editor's top action improves the draft increases from 50.0\% to 87.5\%.
Thus, synthetic recovery provides the action grammar, while mined positives shift the proposal distribution toward edits that are useful under the frozen denoiser.

\subsection{Stage 2: Selection Diagnostics}
\label{app:stage2_selection}

Stage~2 trains the selector to distinguish useful executable edits from actions that are locally valid but harmful after denoising.
We evaluate this behavior on 256 held-out editing states.

\begin{table}[t]
\centering
\small
\setlength{\tabcolsep}{5pt}
\caption{
Stage~2 selection diagnostics on 256 validation examples.
Recovered is the percentage of examples recovered after the selected edit.
}
\label{tab:stage2_diagnostics}
\begin{tabular}{lcc}
\toprule
Policy & Avg. reward & Recovered (\%) \\
\midrule
Stage~1 editor-only & -0.1270 & 40.6 \\
Stage~2 selector & -0.0250 & 63.5 \\
Oracle over candidates & 0.1002 & 79.7 \\
\bottomrule
\end{tabular}
\end{table}

The Stage~2 selector substantially improves both average reward and recovery rate over directly executing the Stage~1 editor action.
The remaining gap to the oracle indicates that useful candidates are often available but are not always identified by the learned selector.
This complements the downstream results in Section~\ref{subsec:controller_analysis}, where Stage~2 primarily prevents harmful over-editing before task-specific alignment.

\subsection{Additional Inference-Time Comparisons}
\label{app:additional_decoding}

We provide additional controlled decoding comparisons on HumanEval using the same frozen LLaDA-1.5 backbone.
These experiments test whether the gains of DELA can be explained by alternative block sizes, token ordering, or additional token-level refinement.

\begin{table}[t]
\centering
\small
\setlength{\tabcolsep}{4.0pt}
\caption{
Additional decoding comparisons on HumanEval.
All methods use the same frozen LLaDA-1.5 backbone.
}
\label{tab:additional_decoding}
\begin{tabular}{lccc}
\toprule
Method & pass@1 & NFE & Latency (s) \\
\midrule
Block size $=8$ & 37.8 & 256 & 5.09 \\
Block size $=16$ (\textsc{Direct}) & 37.2 & 256 & 5.43 \\
Block size $=32$ & 39.0 & 256 & 4.99 \\
Block size $=64$ & 37.8 & 256 & 5.04 \\
Block size $=256$ & 14.0 & 256 & 5.26 \\
Random ordering ($b=32$) & 22.6 & 256 & 5.13 \\
Tolerator-style ($T=2,k=8$) & 38.4 & 275 & 5.46 \\
Tolerator-style ($T=2,k=16$) & 38.4 & 291 & 6.19 \\
DELA-REMASK & 40.9 & 261 & 5.57 \\
DELA & 42.7 & 264 & 5.74 \\
\bottomrule
\end{tabular}
\end{table}

Changing the block size alone does not consistently improve generation quality.
A block size of 32 provides a modest improvement over the default configuration, whereas larger blocks eventually degrade substantially.
Random ordering also performs poorly, indicating that the gain is not explained simply by changing the order in which positions are generated.
Tolerator-style refinement improves over \textsc{Direct}, but requires additional denoising evaluations and remains below both DELA-REMASK and full DELA.

\subsection{Inference-Time Sensitivity}
\label{app:inference_sensitivity}

We study the sensitivity of DELA to the selector margin and the maximum number of refinement rounds on HumanEval.

\begin{table}[t]
\centering
\small
\setlength{\tabcolsep}{6pt}
\caption{
Sensitivity to the selector margin $\gamma_{\mathrm{op}}$ and refinement budget $T_{\max}$ on HumanEval.
}
\label{tab:inference_sensitivity}
\begin{tabular}{lcc}
\toprule
Setting & Value & \# Correct \\
\midrule
\multirow{4}{*}{Selector margin $\gamma_{\mathrm{op}}$}
& $0$ & 70 \\
& $0.1$ & 70 \\
& $0.5$ & 69 \\
& $\infty$ & 64 \\
\midrule
\multirow{3}{*}{Refinement budget $T_{\max}$}
& $1$ & 68 \\
& $2$ & 70 \\
& $3$ & 70 \\
\bottomrule
\end{tabular}
\end{table}

Performance is stable across $\gamma_{\mathrm{op}}\in[0,0.5]$, while disabling selector-guided overrides with $\gamma_{\mathrm{op}}=\infty$ reduces performance to the \textsc{Editor-only} result of 64 correct solutions.
Increasing the refinement budget from one to two rounds improves the number of solved problems from 68 to 70, while a third round provides no further gain.
These results indicate that DELA does not require aggressive selector overrides or long refinement trajectories in the current setting.

\subsection{Edit Behavior and Difficulty}
\label{app:edit_behavior}

We provide a more detailed characterization of the edits executed by DELA on HumanEval.
We report statistics both over the full benchmark and over the subset of examples for which the initial \textsc{Direct} draft is incorrect.

\begin{table}[t]
\centering
\small
\setlength{\tabcolsep}{4.0pt}
\caption{
Edit statistics on HumanEval.
Draft-length and changed-token statistics are computed only over examples receiving at least one edit.
}
\label{tab:edit_statistics}
\begin{tabular}{lcc}
\toprule
Metric & All examples & \textsc{Direct}-failure subset \\
\midrule
Examples receiving an edit & 15 / 164 (9.1\%) & 13 / 103 (12.6\%) \\
REMASK / INSERT / DELETE & 24 / 2 / 1 & 18 / 1 / 1 \\
Changed tokens per REMASK, median & 9 & 9 \\
Changed tokens per edited example, median & 12 & 12 \\
Changed-token fraction, median & 6.3\% & 5.9\% \\
\bottomrule
\end{tabular}
\end{table}

The executed edits are predominantly sparse and local.
Only 9.1\% of all HumanEval drafts receive an edit, and the median edited example changes 12 tokens.
REMASK dominates the executed action distribution, while INSERT and DELETE are used only occasionally for length-changing corrections.
These results support the interpretation of DELA as a sparse, local editing mechanism rather than a general-purpose rewriting system.

\subsection{Two-Phase Decoding and Cache Profiling}
\label{app:two_phase_efficiency}

We provide additional profiling results for the two-phase decoding procedure.
Because the released LLaDA implementation does not expose prefix KV caching, we implement prefix reuse during the fast-drafting phase and compare cached and uncached execution under the same frozen-prefix condition.
In both variants, finalized prefix states remain fixed as new blocks are generated. The comparison measures the cost of reusing those states, not the effect of the frozen-prefix approximation relative to full bidirectional recomputation.

\begin{table}[t]
\centering
\small
\setlength{\tabcolsep}{5pt}
\caption{
Prefix-cache profiling for 256-token fast drafting with batch size one.
}
\label{tab:cache_profile}
\begin{tabular}{lccc}
\toprule
Configuration & Latency (s) & Query positions & Attention pairs \\
\midrule
Without prefix cache & 0.414 & 5,370 & 1.51M \\
With prefix cache & 0.364 & 727 & 0.18M \\
\bottomrule
\end{tabular}
\end{table}

Prefix reuse reduces the number of processed query positions by $7.4\times$ and the number of attention pairs by $8.5\times$.
The corresponding wall-clock improvement is more modest, from 0.414\,s to 0.364\,s per sample, consistent with short-sequence batch-one inference being dominated by costs beyond attention computation.
The wall-clock benefit increases with sequence length, from $1.14\times$ at 256 tokens to $1.25\times$ at 512 tokens and $1.62\times$ at 1,024 tokens.

We also compare the default two-phase procedure with invoking global edit control throughout drafting.

\begin{table}[t]
\centering
\small
\setlength{\tabcolsep}{6pt}
\caption{
Cost of two-phase decoding compared with invoking global refinement throughout draft construction.
}
\label{tab:two_phase_profile}
\begin{tabular}{lcc}
\toprule
Procedure & Latency (s/sample) & Relative cost \\
\midrule
Two-phase drafting and refinement & 0.461 & 1.0$\times$ \\
Refinement after every block & 0.823 & 1.8$\times$ \\
Forced edit after every block & 2.466 & 5.4$\times$ \\
\bottomrule
\end{tabular}
\end{table}

The 1.8$\times$ configuration is a conservative measure of the overhead of repeatedly invoking global edit control, because the controller selects \textsc{Stop} at all block boundaries in this experiment.
The forced-edit configuration illustrates the larger cost of performing non-monotonic refinement throughout draft construction.
Together, these results show that the two-phase design avoids repeated global-controller overhead while making prefix reuse possible during the fast-drafting phase.



\end{document}